%


\documentclass[10pt,letterpaper]{article}

\usepackage{cogsci}

\usepackage{blindtext}
\usepackage{amsfonts}
\usepackage{mathrsfs}
\usepackage{amsmath}
\usepackage{amssymb}
\usepackage{amsthm}
\usepackage{algorithm}
\usepackage{algpseudocode}
\usepackage{graphicx}

\usepackage{dblfloatfix}

\cogscifinalcopy 

\usepackage{pslatex}
\usepackage{apacite}
\usepackage{float} 


\setlength\titlebox{5.1cm}

\title{Computational Complexity of Segmentation}
 
\author{{\large \bf Federico Adolfi (federico.adolfi@esi-frankfurt.de)} \\
  Ernst-Strüngmann Institute for Neuroscience in Cooperation with Max-Planck Society, Germany \\
  School of Psychological Science, University of Bristol, UK
    \AND {\large \bf Todd Wareham (harold@mun.ca)} \\
  Department of Computer Science, Memorial University of Newfoundland, Canada
  \AND {\large \bf Iris van Rooij (iris.vanrooij@donders.ru.nl)} \\
  Donders Institute for Brain, Cognition, and Behaviour, Radboud University, The Netherlands \\ Department of Linguistics, Cognitive Science, and Semiotics \& Interacting Minds Centre, Aarhus University, Denmark}

\makeatletter
\newtheoremstyle{indented}
  {3pt}
  {3pt}
  {\addtolength{\@totalleftmargin}{2.5em}
   \addtolength{\linewidth}{-1.5em}
   \parshape 1 1.5em \linewidth}
  {}
  {\bfseries}
  {.}
  {.5em}
  {}
\makeatother

\theoremstyle{indented}
\newtheorem{theorem}{Theorem}

\theoremstyle{indented}

\theoremstyle{indented}
\newtheorem{lemma}{Lemma}

\theoremstyle{indented}
\newtheorem{definition}{Definition}

\theoremstyle{remark}
\newtheorem*{remark}{Remark}

\begin{document}

\maketitle

\begin{abstract}

Computational feasibility is a widespread concern that guides the framing and modeling of biological and artificial intelligence. The specification of cognitive system capacities is often shaped by unexamined intuitive assumptions about the search space and complexity of a subcomputation. However, a mistaken intuition might make such initial conceptualizations misleading for what empirical questions appear relevant later on. We undertake here computational-level modeling and complexity analyses of \textit{segmentation} — a widely hypothesized subcomputation that plays a requisite role in explanations of capacities across domains — as a case study to show how crucial it is to formally assess these assumptions. We mathematically prove two sets of results regarding hardness and search space size that may run counter to intuition, and position their implications with respect to existing views on the subcapacity.

\textbf{Keywords:} segmentation; computational complexity; tractability;   computational-level analysis; modeling; theory
\end{abstract}

\setlength{\belowdisplayskip}{8pt} \setlength{\belowdisplayshortskip}{4pt}
\setlength{\abovedisplayskip}{8pt} \setlength{\abovedisplayshortskip}{4pt}

\section{Introduction}


Cognitive scientists routinely invoke subcapacities in decompositional efforts to reverse-engineer fully-fledged capacities of minds, brains and machines \cite{cumminsHowDoesIt2000,milkowskiReverseengineeringCognitiveScience2013}. For instance, speech processing is presumed to decompose into, among other things, segmentation and decoding, and action understanding into parsing, predicting and goal inference. These \textit{subcomputations} are thought to tackle certain \textit{problems} that the cognitive system faces to behave appropriately in the world.

Problems that originally show up in one domain (e.g., speech processing) are subsequently encountered in other domains (e.g., action understanding), and so the conceptual apparatus naturally carries over. As a result, cognitive scientists may come to view, e.g., the problem of segmenting speech as analogous to the problem of parsing actions. Researchers can then transfer ideas across the domains, adopting and adapting similar subcomputations in their explanations of the different capacities. What is passed along, however, will include latent (and possibly mistaken) notions about the computational properties of these problems as well. For instance, if a cognitive scientist believes that the search space of speech segmentation is large (combinatorially complex) and that this makes the problem hard, then by analogy, the same can be inferred about the parsing problem in action understanding.  


Once such initial framing of a cognitive (sub)capacity is adopted, it completely shapes the kinds of empirical questions that appear relevant and in so doing determines the course of research programs across disciplines and cognitive domains. Crucially, the assumptions that gave rise to the initial framing are seldom examined formally, and since they are taken for granted as background commitments, empirical tests are not designed to bear on them. These foundational oversights can sidetrack researchers into directions that will be largely immune to empirical corrective feedback later on.

To illustrate how crucial it is to formally assess the validity of intuitive assumptions about problem properties, and what can go astray if one doesn't, we undertake here a formal examination of an example subcapacity.  Our case study is \textit{Segmentation}\footnote{Segmentation relates closely to computations whose names vary depending on time period, cognitive domain, and theoretical framework: chunking, sampling, discretization, integration, grouping, packaging, quantization, sequencing, segregation, parsing, temporal pooling, temporal gestalt, boundary placement, temporal attention.}.  This subcapacity figures ubiquitously in explanations of real-world cognitive capacities such as speech recognition, music perception, active sensing, event memory, temporal attention, action processing, and sequence learning. We focus on two classes of assumptions about its computational properties: i) the search space is excessively complex and ii) this makes the segmentation problem intrinsically hard. 

To formally assess the theoretical viability of these assumed properties, we develop a formalization of the (intuitive) segmentation problem at Marr's \citeyear{marrVisionComputationalInvestigation1982} computational level. Next, we submit this formalization to a mathematical analysis to assess the size of its search space, its computational hardness, and its possible sources of complexity using tools from computational complexity theory \cite{gareyComputersIntractabilityGuide1979,aroraComputationalComplexityModern2009,vanrooijCognitionIntractabilityGuide2019}.
As our results may run counter to intuition, we end with a word of caution regarding the general non-intuitiveness of the computational properties of hypothesized cognitive problems. 
 

\section{Conceptualization of segmentation}

In order to rigorously examine computational assumptions, we need a mathematical formalization of the problem that can be submitted to further analyses. This computational-level model, in turn, should capture key aspects of the theorized cognitive capacity. To that end, in this section we synthesize conceptualizations of the segmentation problem as it appears in various cognitive domains.

\subsection{Informal definitions: segmentation as a fundamental subcomputation}

``How the brain processes sequences is a central question in cognitive science and neuroscience" \cite{jinLowfrequencyNeuralActivity2020}. A substantial amount of information available to the cognitive system is ``continuous, dynamic and unsegmented" \cite{zacksHumanBrainActivity2001}. The purpose of the segmentation process is, then, ``to generate elementary units of the appropriate temporal granularity for subsequent processing" \cite{giraudCorticalOscillationsSpeech2012}. Succinctly, ``[t]he central nervous system appears to `chunk' time" \cite{poeppelAnalysisSpeechDifferent2003}.

Several subfields of the cognitive and brain sciences have proposed segmentation as a key subcomputation. \textit{Active listening} (cf. active sensing) casts it as ``the selection of internal actions, corresponding to the placement of [...] boundaries" \cite{fristonActiveListening2021}, ``to sample the environment" \cite{poeppelSpeechRhythmsTheir2020}. \textit{Event cognition} similarly defines it as ``the process of identifying event boundaries [...] a concomitant component of normal event perception" \cite{zacksHumanBrainActivity2001}. In \textit{episodic memory}, it is ``the process by which people parse the continuous stream of experience into events and sub-events [for] the formation of experience units" \cite{jeunehommeEventSegmentationTemporal2018}. Central to \textit{music perception}, it features as determining the ``perceptual boundaries of temporal gestalts" \cite{tenneyTemporalGestaltPerception23} and ``entails the parsing into chunks" \cite{farboodDecodingTimeIdentification2015,tillmannMusicLanguagePerception2012}. The \textit{speech recognition} literature describes it as the core process of ``segmenting the continuous speech stream into units for further perceptual and linguistic analyses" \cite{tengSpeechFineStructure2019}, where it ``allows the listener to transform [the] signal into segmented, discrete units, which form the input for subsequent decoding steps" \cite{poeppelSpeechRhythmsTheir2020}. In \textit{action processing}, ``[a] fundamental problem observers must solve [...] is segmentation [...] Identifying distinct acts within the dynamic flow of motion is a basic requirement for engaging in further appropriate processing" \cite{baldwinSegmentingDynamicHuman2008}.

Such ubiquitousness has been suggestive that the capacity ``appeals to general principles the brain may use to solve a variety of problems" \cite{fristonActiveListening2021,himbergerPrinciplesTemporalProcessing2018}. ``[M]any sequence-chunking tasks share common computational principles. [E.g.,] to find and encode the chunk boundaries" \cite{jinLowfrequencyNeuralActivity2020}. Segmentation as a subcomputation appears across processing hierarchies as well, even when the world is relatively static: ``[it] exists at multiple layers within a given problem" \cite{wybleTemporalSegmentationFaster2019}. The downstream operations on segments that partially determine optimal segmentation play similar roles but vary with cognitive domain and modeling framework.

\textit{Segmentation}, concisely, is then a fundamental subcomputation whose requisite role across cognitive domains and processing hierarchies is to determine, given a sequence representation, the optimal boundary placement with respect to a downstream computation over segments.
		
\section{Formalization of segmentation}

A succinct, yet informal, definition of segmentation can be stated by verbally specifying the inputs and outputs of the conjectured subcomputation. 
\begin{quote}
\textsc{segmentation (informal)}\\
    \textit{Input}: A sequence and a downstream process that, for any given segment of the sequence, can evaluate its quality relative to domain-specific criteria. \\
    \textit{Output}: The best\footnote{Without loss of generality, here `best' could be replaced by `good enough', and our formal results would still apply.} segmentation of the sequence with respect to criteria relevant for the downstream process.
\end{quote}

\noindent With this sketch in mind (see Fig. \ref{segmentation-schematic} for a schematic), we develop the formal definition of the computational-level model.

\begin{figure}[h]
\vspace{-0.2cm}
\begin{center}
    \includegraphics[scale=0.45]{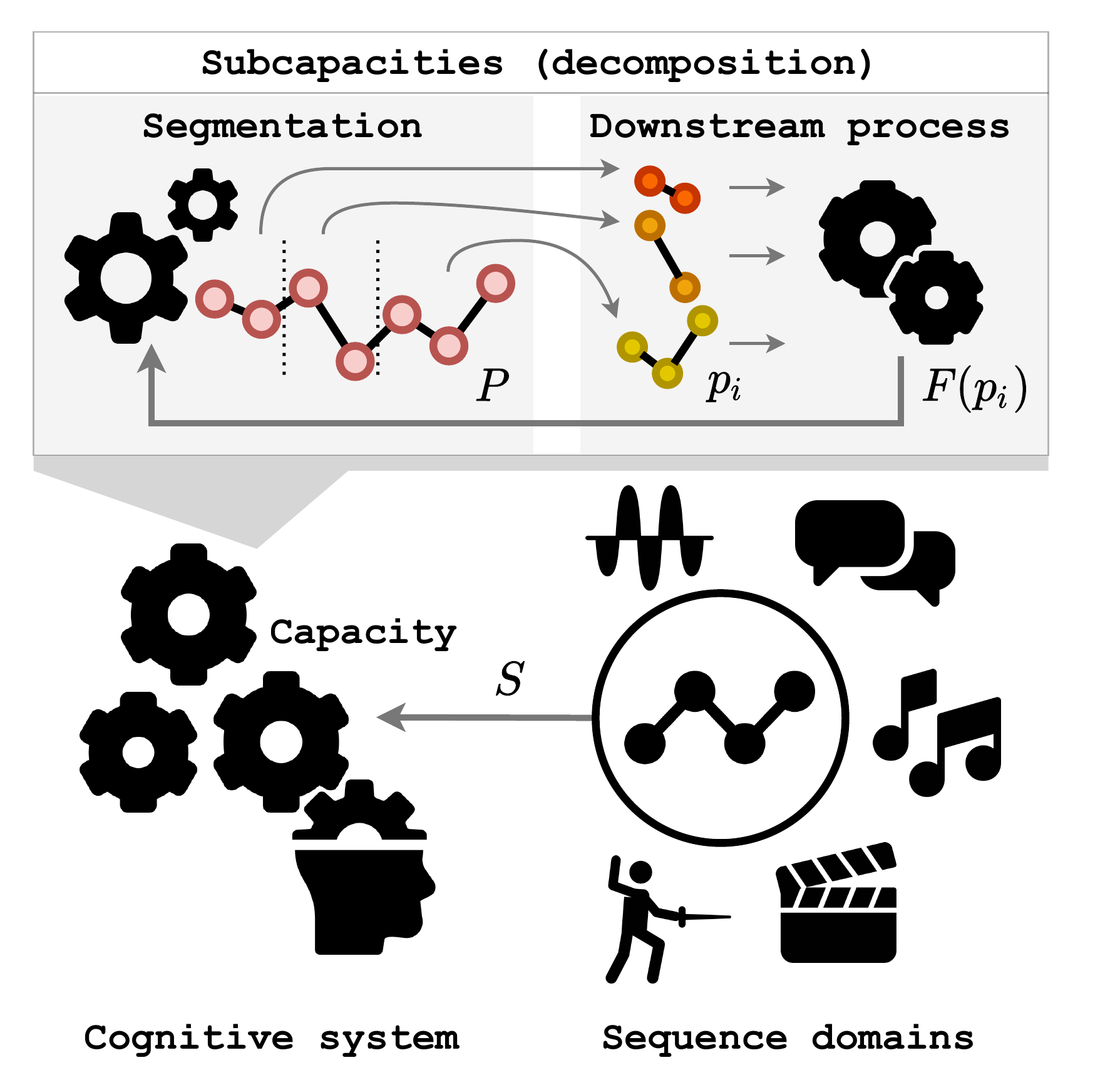}
\end{center}
\vspace{-0.6cm}
\caption{Segmentation is a core subcomputation in domains including sound, speech, music, action and event processing (bottom). Segmentation itself can be functionally decomposed such that domain-specific downstream processes inform which possible segmentations of a given sequence $S$ are best (top). Refer to the main text for definitions of $S$, $P$, $F$ and $p_i$.}
\vspace{-0.0cm}
\label{segmentation-schematic}
\end{figure}

We envision an input sequence $S = (s_1, s_2, ..., s_n)$ that captures the idea of a time-ordered representation the cognitive system must work with. Its origin could be sensory encoding at the periphery or deeper, more elaborate processes alike (e.g., an encoding of the acoustic envelope of speech or music, or a compressed representation of a visual scene). As instances of the segmentation problem appear throughout processing hierarchies, their  inputs vary in origin and nature. We model the sequence with according generality. Next, we pin down the notion of a downstream cognitive process that computes over segments $p \in \mathscr{P}$ (e.g., a decoder that maps speech segments to phonemes, or a module that maps scene segments to action meanings). Our formalization is agnostic as to what these domain-specific processes, and  theoretical frameworks used to model them, might be. We aim for generality and simply model, with a function $F: \mathscr{P} \mapsto \mathbb{Z^+}$ (over a possibly infinite domain) available at the input, the idea that the process is capable of guiding the placement of boundaries. This is achieved by reporting back some (discretized) aspect of its performance $F(p) \in \mathbb{Z^+}$ (e.g., label probability, likelihood w.r.t. generative model, depending on framework). The desired output — a useful segmentation scheme — is modeled as a collection $P$ of disjoint segments jointly making up the input sequence, whose overall appropriateness $V(P)$ with respect to the downstream process is optimal. These modeling choices yield the following formalization.\footnote{For succinctness, we omit the set notation for sequences and subsequences, $\{(i, s_i), ..., (n, s_n)\}$, and we slightly abuse notation when using set operations directly on tuples.}

\begin{quote}
    \textsc{segmentation (formal)} \\
    \textit{Input}: a finite sequence $S = (s_1, s_2, ..., s_N)$ of length $N \in \mathbb{N}$, with $s_i \in \mathbb{Z}$ and a scoring function $F: \mathscr{P} \mapsto \mathbb{Z^+}$ that maps contiguous subsequences $p = (s_i, s_{i+1}, ..., s_{i+q})$ to a positive value $F(p)$. \\
    \textit{Output}: a segmentation of $S$ into contiguous subsequences, $P = ( (s_1, s_2, ...), ..., (..., s_{N-1}, s_N) )$, where segments are disjoint, $\forall p_i, p_j \in P: p_i \cap p_j = \emptyset$, and their concatenation yields the original sequence, $\bigcup_{i=1}^{|P|} p_i = S$, such that its overall value $V(P) = \sum_{p \in P} F(p)$ is maximum.\footnote{   We model segmentation as an optimization problem to keep with conceptual constraints but without loss of generality. This modeling choice makes our results an upper-bound on the complexity of the problem.}

\end{quote}

\section{Assumptions about segmentation}

To determine the course of our analyses, we survey  views on the computational properties of the segmentation problem. We illustrate with examples and synthesize core intuitions.

\subsection{Problem properties: segmentation as a computational challenge}

\subsubsection{Hardness and complexity.} Segmentation problems have been widely assumed to be computationally challenging. This is evidenced in explicit statements and in the `solutions' researchers propose after taking onboard certain beliefs about hardness. To illustrate: ``Speech recognition is not a simple problem. The auditory system must parse a continuous signal into discrete words" \cite{fristonActiveListening2021}. ``It is hard for a brain, and very hard for a computer" \cite{poeppelAnalysisSpeechDifferent2003}. ``[S]egmentation requires inference over the intractably large discrete combinatorial space of partitions." \cite{franklinStructuredEventMemory2020a}.

\subsubsection{Sources of complexity.} As is evident in researchers' descriptions, the hardness is attributed to the (presumed) combinatorial explosion involved in the number of possible segmentation schemes — the size of the problem search space is informally taken as the source of computational complexity. Again, to illustrate: ``Where should these candidate boundaries be placed? In an extreme case, we could place boundaries at every combination of time points [...] but that would be computationally inefficient given that we can reduce the scope of possibilities" \cite{fristonActiveListening2021}. ``The problem would be enormously complicated by the presence of so many candidates [...]" \cite{brentSpeechSegmentationWord1999}.
		
\subsubsection{Solutions for complexity.} Arguably as a consequence of coupling these intuitions with additional assumptions, the effectiveness of certain solutions has been taken for granted. ``From the computational perspective, the aim of research in segmentation [...] is to identify mechanisms [that] reduce these computational burdens by reducing the number of candidate[s]" \cite{brentSpeechSegmentationWord1999}. This position has motivated the search for bottom-up segmentation cues or top-down biases (e.g., priors) that would achieve, among other things, such a narrowing down \cite<e.g.,>{tengSpeechFineStructure2019,fristonActiveListening2021}. ``We suggest a different role [of cues] in which they are part of the [segmentation] (rather than decoding) process" \cite{ghitzaRoleThetaDrivenSyllabic2012}. For instance, researchers may observe environmental \cite{dingTemporalModulationsSpeech2017} and neural \cite{tengConcurrentTemporalChannels2017, tengThetaBandOscillations2017} regularities suggestive of segment-size constrained segmentation processes \cite{poeppelSpeechRhythmsTheir2020,poeppelAnalysisSpeechDifferent2003}.

\subsection{Core assumptions}
This survey reveals a core set of intuition-based assumptions about the computational properties of segmentation:
\vspace{-0.6em}
\begin{itemize}
    \itemsep-0.2em 
    \item Real-world sequences (e.g., speech, music, scenes, actions) and internal representations alike (e.g., memories of experiences) are ``complex, continuous, dynamic flows".
    \item The cognitive system needs to make use of discrete representations of segments that are appropriate (size- and content-wise) for downstream tasks.
    \item The problem is ``hard" — the obstacle being that there are ``too many" possible segmentations of a given sequence.
    \item Cognitive systems must reduce the possibilities somehow, e.g., via bottom-up cues and/or top-down biases.
\end{itemize}

\section{Computational complexity of segmentation}
It is generally non-obvious what problems are genuinely (as opposed to merely apparently) hard, which refinements will render a model tractable, or which restrictions will effectively reduce a search space. Intuitions about computational properties of problems are frequently mistaken, hence need to be validated against formal analyses \cite{vanrooijIdentifyingSourcesIntractability2008a}. 
This section presents a complexity analysis in two parts according to the assumed properties they examine: search space size, and problem hardness.

\subsection{Search space of segmentation}
We analyze the search space size as a possible source of hardness by envisioning a simple brute-force algorithm. If the number of candidate solutions grows polynomially
 (i.e., upperbounded by $N^c$, where $N$ is the sequence length and $c$ is some constant), then such an algorithm would be tractable. 
We describe this growth through combinatorial analysis; first for the unconstrained problem and then including various theoretically motivated constraints.

\subsubsection{Unbounded parts.}
When the size $q$ of the segments is not constrained other than by the length $N$ of the sequence, i.e., $q \in [1, N]$, all boundary placements are possible. Notice there is a bijection between binary strings of length $N-1$ and boundary placements in sequences of length $N$ (Fig. \ref{boundaries}).
    
\begin{figure}[h]
\vspace{-0.2cm}
\begin{center}
    \includegraphics[scale=0.3]{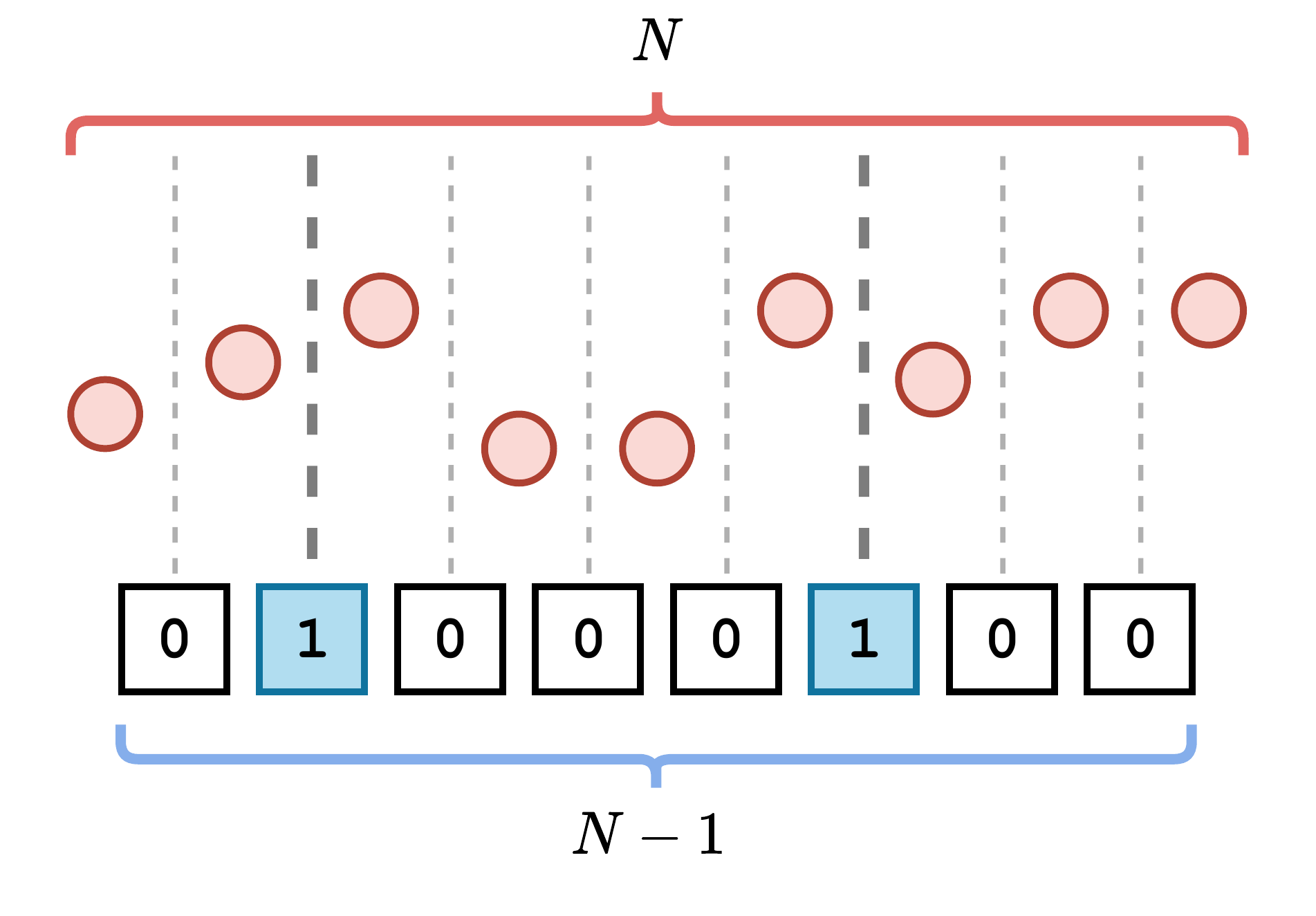}
\end{center}
\vspace{-0.6cm}
\caption{Binary strings encode boundary placements. A sequence of length $N$ (top) admits $N-1$ choices of the presence/absence of boundaries (bottom).}
\vspace{-0.1cm}
\label{boundaries}
\end{figure}

\noindent Since the number of possible binary strings of length $k$ is given by $2^k$, the number of possible segmentations that use unbounded parts grows as $2^{N-1}$ (i.e., exponentially).

\subsubsection{Segmentation as integer composition.} In order to incorporate various constraints in combinatorial analyses, we draw an analogy between segmentation and \textit{integer compositions} (Fig. \ref{integer-composition}). This enables us to take an analytic combinatorics approach \cite{flajoletSedgwick2009} to the latter and leverage the results to infer properties of the former.

\begin{definition}[Integer composition]
A composition of an integer $N$ is an ordered list $C = (p_{1}, p_{2}, ..., p_k)$ of positive integer parts $p_i \in \mathbb{N}^+$, such that $N = \sum_{p \in C} p$.
\end{definition}

\begin{figure}[H]
\vspace{-0.3cm}
\begin{center}
    \includegraphics[scale=0.50]{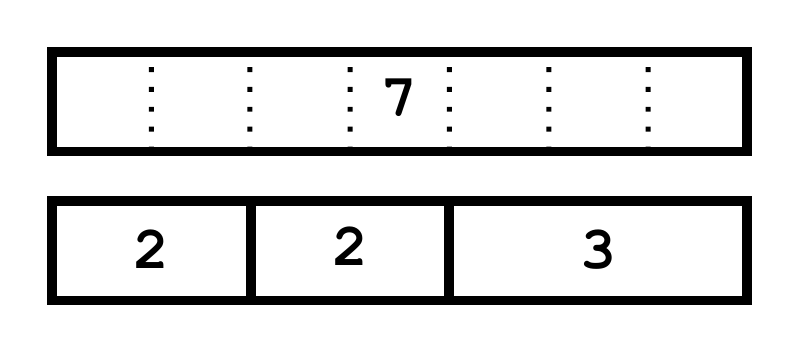}
\end{center}
\vspace{-0.7cm}
\caption{Segmentation as integer composition. An integer (top), arbitrary parts, and their composition (bottom) stand for a sequence, segments, and a possible segmentation.}
\vspace{-0.2cm}
\label{integer-composition}
\end{figure}

To obtain the growth rate for various restricted cases, we derive generating functions for each, whose coefficients count the number of compositions, and submit them to analysis based on the following lemma.

\begin{lemma}[Growth rate of the coefficients of a rational function]\label{combinatorics-lemma}
    Let $S(x)=\sum_{n \geq 0} s_n x^n = \frac{P(x)}{Q(x)}$ be a rational function with $Q(0) \neq 0$ and assume P(x) and $Q(x)$ do not have any roots in common. The general form of the coefficients is $[x^N]S(x) = A^N\Phi(N)$, where $A^N$ is the exponential growth factor and $\Phi(N)$ is the subexponential growth factor. Then the exponential growth rate $A$ of the sequence of coefficients $(s_n)$ is equal to $|\frac{1}{\alpha}|$, where $\alpha$ is the root of $Q(x)$ of smallest modulus \cite<for proof, see>[Theorem 7.10]{bonaCombinatorics2016}.
\end{lemma}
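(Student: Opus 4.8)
The plan is to reduce the coefficient extraction to a sum of elementary rational functions via partial fraction decomposition, for which the coefficients are known in closed form, and then identify which term dominates asymptotically. First I would dispose of the case $\deg P \geq \deg Q$ by polynomial division, writing $S(x) = R(x) + \tilde{P}(x)/Q(x)$ with $\deg \tilde{P} < \deg Q$; since $R(x)$ is a polynomial it contributes to only finitely many coefficients $s_n$ and is therefore irrelevant to the asymptotic growth rate.

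Next I would factor $Q(x)$ over $\mathbb{C}$. Because $Q(0) \neq 0$, none of its roots is zero, so I can write $Q(x) = Q(0)\prod_j (1 - x/\alpha_j)^{m_j}$, where the $\alpha_j$ are the distinct roots with multiplicities $m_j$. The partial fraction expansion then gives $\tilde{P}(x)/Q(x) = \sum_j \sum_{k=1}^{m_j} c_{j,k}\,(1 - x/\alpha_j)^{-k}$. Applying the binomial series $(1-x/\alpha)^{-k} = \sum_{n\geq 0}\binom{n+k-1}{k-1}\alpha^{-n}x^n$ term by term and collecting powers of $x$, I obtain (for all $n$ larger than $\deg R$) a closed form $s_n = \sum_j \Pi_j(n)\,\alpha_j^{-n}$, where each $\Pi_j$ is a polynomial in $n$ of degree $m_j - 1$. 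This already exhibits the announced shape $[x^N]S(x) = A^N\Phi(N)$: the factors $\alpha_j^{-N} = (1/\alpha_j)^N$ are exponential and the polynomial prefactors $\Pi_j$ are subexponential.

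Finally I would extract the exponential growth rate. Since $|\alpha_j^{-n}| = |\alpha_j|^{-n}$, the term whose root has the smallest modulus dominates: for the root $\alpha$ of least modulus, $\limsup_{n}|s_n|^{1/n} = 1/|\alpha|$, because each polynomial factor satisfies $|\Pi_j(n)|^{1/n}\to 1$ and every contribution from a larger-modulus root is smaller by a fixed factor strictly below $1$ raised to the $n$. Hence the growth rate is $A = |1/\alpha|$.

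The main obstacle is ensuring that the smallest-modulus root actually survives into the expansion, i.e., that its leading partial fraction coefficient is nonzero. This is precisely where the hypothesis that $P$ and $Q$ share no common roots becomes essential: were $\alpha$ a common root, it would cancel, the corresponding $c_{j,k}$ could vanish, and the dominant pole — hence the growth rate — would shift to the next smallest root. Under coprimality every root of $Q$ is a genuine pole of $S$, so the radius of convergence equals $|\alpha|$ and the conclusion follows. A minor additional care is needed when several roots attain the minimal modulus simultaneously; the exponential rate $1/|\alpha|$ is unaffected, though $\Phi(N)$ then combines several oscillating contributions, which is why I state the result with a $\limsup$ rather than an ordinary limit.
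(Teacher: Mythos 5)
The paper does not prove this lemma itself --- it defers to the cited Theorem 7.10 of B\'ona --- but your partial-fraction argument (polynomial division, expansion over the poles via the binomial series, domination by the smallest-modulus pole, with coprimality guaranteeing that pole is genuine and Cauchy--Hadamard handling possible cancellation among equal-modulus poles) is precisely the standard proof of that cited result. It is correct and takes essentially the same route, so there is nothing to add.
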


\subsubsection{Lower-bounded parts.} We consider integer compositions involving parts $p_i \in[a, N]$, with $1 < a < N$.

\begin{theorem}[]
    The number of $[a, N]$-restricted integer compositions of $N$ grows exponentially with $N$.
\end{theorem}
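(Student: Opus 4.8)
The plan is to translate the counting problem into a generating function and then read off the exponential growth rate from its dominant singularity via Lemma~\ref{combinatorics-lemma}. First I would observe that, for a composition of $N$, the upper bound in $[a,N]$ is never binding: any part is automatically at most $N$, so the admissible part sizes form the set $\{a, a+1, a+2, \dots\}$. A composition is an ordered sequence of such parts, so by the standard sequence construction the ordinary generating function whose coefficient of $x^N$ counts the $[a,N]$-restricted compositions of $N$ is
\begin{equation*}
S(x) = \frac{1}{1 - \sum_{j \geq a} x^j} = \frac{1}{1 - \frac{x^a}{1-x}} = \frac{1-x}{1 - x - x^a}.
\end{equation*}
This is rational with $P(x) = 1-x$ and $Q(x) = 1 - x - x^a$. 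I would then verify the hypotheses of Lemma~\ref{combinatorics-lemma}: $Q(0) = 1 \neq 0$, and the only root of $P$ is $x=1$, where $Q(1) = -1 \neq 0$, so $P$ and $Q$ share no root.

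Next I would locate the root $\alpha$ of $Q$ of smallest modulus. Writing $Q(x)=0$ as $x + x^a = 1$ and setting $g(x) = x + x^a$, note that $g$ is continuous and strictly increasing on $[0,1]$ with $g(0)=0$ and $g(1)=2$; by the intermediate value theorem there is a unique real root $\alpha \in (0,1)$, and it is simple since $Q'(\alpha) = -1 - a\alpha^{a-1} < 0$. The main obstacle is to argue that this positive real root is in fact the root of \emph{smallest} modulus, which is what Lemma~\ref{combinatorics-lemma} requires. This is where I would appeal to the nonnegativity of the coefficients $s_n$ (they count objects): by Pringsheim's theorem, a power series with nonnegative coefficients has a singularity at its radius of convergence lying on the positive real axis, so the smallest-modulus singularity of $S$ — equivalently the smallest-modulus root of $Q$ — is precisely $\alpha \in (0,1)$.

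Finally, Lemma~\ref{combinatorics-lemma} yields $[x^N]S(x) = A^N \Phi(N)$ with exponential growth factor $A = 1/\alpha$. Since $\alpha \in (0,1)$ we have $A > 1$, and because $\alpha$ is a simple root the subexponential factor $\Phi(N)$ is asymptotically constant; hence the coefficient of $x^N$ grows like a positive constant times $A^N$ with $A > 1$, i.e. exponentially in $N$, for every fixed $a$ with $1 < a < N$. As a sanity check, $a = 2$ gives $Q(x) = 1 - x - x^2$, $\alpha = (\sqrt{5}-1)/2$ and $A = (1+\sqrt{5})/2$, recovering the familiar Fibonacci growth of compositions into parts of size at least two.
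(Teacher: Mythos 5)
Your proof is correct and takes essentially the same route as the paper's: you derive a rational generating function with denominator $Q(x)=1-x-x^a$ (your numerator $1-x$ versus the paper's $x^a$ is immaterial, as the paper itself notes), locate a root in $(0,1)$ by the intermediate value theorem, and invoke Lemma~\ref{combinatorics-lemma}. You are in fact somewhat more careful than the paper on two hypotheses it leaves implicit — that the positive real root is the one of \emph{smallest modulus} (your appeal to Pringsheim's theorem) and that the subexponential factor does not itself kill the growth (your simplicity-of-the-root argument) — so no gaps here.
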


\begin{proof}
For each part $p_i$ the choice is among the positive integers ($a, a+1, a+2, ...$), so in terms of the generating function we have $(x^a + x^{a+1} + ...)$, where $a$ is an integer constant. By factoring and using the closed form of the geometric series we write $x^a(1 + x + x^2 ...) = x^a \frac{1}{1-x}$. For a $k$-part composition we thus have $(\frac{x^a}{1-x})^k$, so for compositions of any number of parts we sum\footnote{Note that summing from $k=0$ or from $k=1$ to $n$ does not affect the denominator of the resulting generating function, therefore the results will be robust to these variations.} over $k$ and write $\sum_{k=1}^\infty (\frac{x^a}{1-x})^k$, which similarly as above can be simplified $\frac{x^a}{1-x} \left[ 1 + (\frac{x^a}{1-x})^1 + (\frac{x^a}{1-x})^2 + ... \right]$ until we arrive at the closed form that completes the construction:
$$G_{LB}(x) := \frac{x^a}{1-x-x^a}$$

\noindent Having derived the generating function, we are interested in the growth rate of the coefficients $[x^N]G_{LB}(x)$. Let $Q(x) = 1 - x - x^a$, and note that $\lim_{x \to 0}Q(x)=1$ and $\lim_{x \to 1}Q(x)=-1$. Since $Q(0-\epsilon_1) > 0$ and $Q(1-\epsilon_2) < 0$ for some small positive $\epsilon_1, \epsilon_2$, by the intermediate value theorem, it follows that there always exists a root that satisfies $0 < \alpha < 1$. By Lemma \ref{combinatorics-lemma} the exponential growth factor is $A^N$, for some $A>1$.
\end{proof}

\subsubsection{Upper-bounded parts.} We consider integer compositions involving parts $p_i \in[1, b]$, with $1 < b < N$.

\begin{theorem}[]
    The number of $[1, b]$-restricted integer compositions of $N$ grows exponentially with $N$.
\end{theorem}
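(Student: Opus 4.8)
The plan is to mirror the structure of the preceding theorem exactly: construct a generating function whose $N$th coefficient counts the $[1,b]$-restricted compositions of $N$, reduce it to lowest terms, and then apply Lemma \ref{combinatorics-lemma} after locating the smallest-modulus root of the denominator. First I would build the single-part generating function. Since each part may take any value in $\{1, 2, \dots, b\}$, the admissible choices correspond to the polynomial $x + x^2 + \cdots + x^b = \frac{x(1-x^b)}{1-x}$. Summing the $k$-fold products over all numbers of parts $k \geq 1$ as a geometric series in $u = \frac{x(1-x^b)}{1-x}$ (exactly as in the lower-bounded case) yields the closed form
$$G_{UB}(x) := \frac{x - x^{b+1}}{1 - 2x + x^{b+1}}.$$

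Next I would prepare $G_{UB}$ for Lemma \ref{combinatorics-lemma}, and this is where I expect the main obstacle. The lemma requires that numerator and denominator share no roots, but here both vanish at $x = 1$: the numerator factors as $x(1-x)(1 + x + \cdots + x^{b-1})$, and one checks $Q(1) = 1 - 2 + 1 = 0$. This shared root is absent from the lower-bounded case, so it needs explicit care. I would cancel the common factor $(1-x)$, using the telescoping identity $(1-x)(1 - x - x^2 - \cdots - x^b) = 1 - 2x + x^{b+1}$, to obtain the reduced denominator $\hat{Q}(x) = 1 - x - x^2 - \cdots - x^b$ and reduced numerator $x + x^2 + \cdots + x^b$, which no longer share the root $x = 1$ (the numerator equals $b \neq 0$ there). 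Only after this reduction is the hypothesis of the lemma legitimately satisfied.

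Finally I would locate the dominant singularity. Since $\hat{Q}(0) = 1 > 0$ and $\hat{Q}(1) = 1 - b < 0$ for $b > 1$, the intermediate value theorem supplies a real root $\alpha \in (0,1)$; in fact the defining relation $\alpha + \alpha^2 + \cdots + \alpha^b = 1$ confines $\alpha$ to $(1/2, 1)$ for every $b$ (recovering the golden-ratio value $\alpha = \frac{\sqrt{5}-1}{2}$ when $b = 2$, as a sanity check against the Fibonacci count of $\{1,2\}$-compositions). Because $G_{UB}$ has nonnegative coefficients, being a composition-counting series, Pringsheim's theorem guarantees that this positive real $\alpha$ is a root of smallest modulus, so it is the relevant singularity rather than the cancelled $x = 1$. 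Applying Lemma \ref{combinatorics-lemma} then gives exponential growth factor $A = |\tfrac{1}{\alpha}| > 1$ since $\alpha < 1$, so $[x^N]G_{UB}(x) = A^N \Phi(N)$ grows exponentially with $N$, completing the proof.
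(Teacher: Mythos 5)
Your proposal follows essentially the same route as the paper: build the single-part polynomial $x + x^2 + \cdots + x^b = \frac{x(1-x^b)}{1-x}$, sum the geometric series over the number of parts to get a rational generating function with denominator $1-2x+x^{b+1}$, locate a real root in $(0,1)$ by the intermediate value theorem, and invoke Lemma \ref{combinatorics-lemma} to conclude $A>1$. (Whether you sum from $k=0$ or $k=1$ only changes the numerator, as the paper's footnote notes, so that discrepancy is immaterial.)

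The one substantive difference is that you explicitly notice and repair the shared root at $x=1$: both $1-x$ (resp.\ $x-x^{b+1}$) and $1-2x+x^{b+1}$ vanish there, so the hypothesis of Lemma \ref{combinatorics-lemma} that $P$ and $Q$ have no common roots is not satisfied by the unreduced form. The paper's proof applies the lemma directly without performing this cancellation; your reduction to $\hat{Q}(x)=1-x-x^2-\cdots-x^b$ (with $\hat{Q}(1)=1-b\neq 0$) makes the application of the lemma legitimate, and your Pringsheim argument (or, more elementarily, the triangle inequality applied to $1=|\beta+\cdots+\beta^b|$) confirms that the positive real root is indeed the smallest-modulus one, a point the paper also leaves implicit. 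The conclusion is the same either way, since the relevant root lies in $(0,\tfrac{3}{4}]$ and hence strictly inside the unit disk regardless of what happens at $x=1$, but your version is the more rigorous instantiation of the argument.
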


\begin{proof}
For each part $p_i$ the choice is among the positive integers ($1, 2, 3, ..., b$), so in terms of the generating function we have $(x^1 + x^{2} + ... + x^b)$, where $b$ is an integer constant. Factoring out $x$ and using the identity $1+y+y^2+...+y^n=\frac{1-y^{n+1}}{1-y}$, we write $x(1 + x^{1} + ... + x^{b-1}) = \frac{x-x^{b}}{1-x}$. 
For $k$-part compositions we raise to the $k$-th power and for compositions of any number of parts we sum over all $k$, which yields
$\sum_{k=0}^\infty (\frac{x-x^{b}}{1-x})^k = 1 + (\frac{x-x^{b}}{1-x})^1 + (\frac{x-x^{b}}{1-x})^2 + ... $, and finally by using the geometric series and simplifying we complete the construction:
$$G_{UB}(x) :=\frac{1-x}{1-2x+x^{b+1}}$$
Having derived the generating function, we are interested in the growth rate of the coefficients $[x^N]G_{UB}(x)$. Following the intermediate value theorem, let $Q(x)=1-2x+x^{b+1}$ and note that $\lim_{x \to 0}Q(x) > 0$, so $Q(0-\epsilon) > 0$ for some small positive $\epsilon$, and  $Q(\frac{3}{4}) < 0$, for any $b$ as above, hence there always exists a root $\alpha$ located in the interval $(0, \frac{3}{4}]$. By Lemma \ref{combinatorics-lemma} the exponential growth factor is $A^N$, for some $A>1$
\end{proof}

\subsubsection{Doubly-bounded parts.}  We consider compositions involving parts parts $p_i \in[a, b]$, with $1 < a < b < N$.

\begin{theorem}[]
    The number of $[a, b]$-restricted integer compositions of $N$ grows exponentially with $N$.
\end{theorem}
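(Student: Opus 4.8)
The plan is to follow the template of the two preceding proofs: construct a generating function whose $N$-th coefficient counts the $[a,b]$-restricted compositions of $N$, and then apply Lemma~\ref{combinatorics-lemma} once a denominator root of modulus strictly less than $1$ has been located. First I would write the per-part generating function $x^a + x^{a+1} + \cdots + x^b$, factor out $x^a$, and use the finite geometric sum to obtain the closed form $\frac{x^a - x^{b+1}}{1-x}$. Summing the $k$-th powers of this expression over all part counts $k \geq 0$ and applying the geometric series once more then yields
\[
G_{DB}(x) := \frac{1-x}{1 - x - x^a + x^{b+1}}.
\]

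The main obstacle is a subtlety absent from the lower-bounded case: the denominator $Q(x) = 1 - x - x^a + x^{b+1}$ satisfies $Q(1) = 1 - 1 - 1 + 1 = 0$, so $x=1$ is a root of $Q$ that it shares with the numerator $P(x) = 1-x$. This violates the coprimality hypothesis of Lemma~\ref{combinatorics-lemma}, so the lemma cannot be applied to $G_{DB}$ as written. I would resolve this by extracting the common factor explicitly: writing $x^{b+1} - x^a = -x^a(1 - x^{b+1-a})$ and using $1 - x^{b+1-a} = (1-x)(1 + x + \cdots + x^{b-a})$ gives the factorization $Q(x) = (1-x)\bigl[\,1 - (x^a + x^{a+1} + \cdots + x^b)\bigr]$, which also telescopes back cleanly as a check. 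Cancelling the $(1-x)$ common to numerator and denominator leaves the reduced rational function
\[
\widetilde{G}_{DB}(x) = \frac{1}{1 - (x^a + x^{a+1} + \cdots + x^b)},
\]
which is precisely the sequence-construction generating function for compositions with parts in $[a,b]$ and to which the lemma now legitimately applies.

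Finally I would locate a root of the reduced denominator $\widetilde{Q}(x) = 1 - (x^a + \cdots + x^b)$ inside the unit interval by the intermediate value theorem, exactly as in the two previous proofs. Since $\widetilde{Q}(0) = 1 > 0$ while $\widetilde{Q}(1) = 1 - (b - a + 1) = a - b < 0$ (here $1 < a < b$ forces the number of parts $b - a + 1$ to be at least $2$), there is a real root of $\widetilde{Q}$ in $(0,1)$; hence the root $\alpha$ of smallest modulus satisfies $\lvert\alpha\rvert < 1$, and Lemma~\ref{combinatorics-lemma} yields exponential growth factor $A = \lvert 1/\alpha\rvert > 1$, which proves the theorem. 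The only genuinely new step relative to the earlier cases is the cancellation of the shared factor $(1-x)$; the sign-change argument is then routine once $\widetilde{Q}$ is in hand.
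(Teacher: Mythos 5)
Your proposal is correct and follows essentially the same route as the paper: the same generating function (up to the harmless $k=0$ versus $k=1$ summation, which the paper's earlier footnote already notes does not affect the denominator), the same factorization of $1-x-x^a+x^{b+1}$ as $(1-x)\bigl[1-(x^a+\cdots+x^b)\bigr]$, and the same intermediate-value argument on the cofactor. Your explicit cancellation of the shared $(1-x)$ factor to restore the coprimality hypothesis of Lemma~\ref{combinatorics-lemma} is a welcome bit of extra care --- the paper performs the identical factorization but presents it only as a device for locating a root strictly inside $(0,1)$, without remarking that the numerator and denominator of $G_{DB}$ as written share the root $x=1$.
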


\begin{proof}
For each part $p_i$ the choice is among the positive integers ($a, a+1, ..., b$), so in terms of the generating function we have $(x^a + x^{a+1} + ... + x^b)$. Factoring out $x^a$ and using $1+y+y^2+...+y^n=\frac{1-y^{n+1}}{1-y}$, we write $\frac{x^a-x^{b+1}}{1-x}$. For compositions of any number $k$ of parts, we have $\sum_{k=1}^\infty (\frac{x^a-x^{b+1}}{1-x})^k = 1 + (\frac{x^a-x^{b+1}}{1-x})^1 + (\frac{x^a-x^{b+1}}{1-x})^2 + ... $
Using the geometric series and simplifying, we arrive at the final form:
$$G_{DB}(x) := \frac{x^a-x^{b+1}}{1-x-x^a+x^{b+1}}$$
\noindent Having derived the generating function, we are interested in the growth rate of the coefficients $[x^N]G_{DB}(x)$. Since $1 < a < b$, with $c = b-a \geq 2$, we write the polynomial in the denominator $Q(x) = 1 - x - x^a + x^{a+c}$ which we then rewrite as $(1-x)+x^a(x^c-1) = (1-x)(1-x^a[x^{c-1} + x^{c-2} + ... + 1])$. Note that $Q(x)$ has a root in $(0, 1)$ if and only if $q(x) = 1-x^a(x^{c-1} + x^{c-2} + ... + 1)$ has such a root. We have that $\lim_{x \to 0 }q(x) = 1$ and furthermore $\lim_{x \to 1 }q(x) = 1 - c$, and recall $c \geq 2$. By the intermediate value theorem, since $q(0-\epsilon_1) > 0$ and $q(1-\epsilon_2) < 0$ for some small positive $\epsilon_1, \epsilon_2$, there always exists a root $\alpha$ of $Q(x)$ in the open interval $(0, 1)$ for any integers $a,b$ constrained as above. By Lemma \ref{combinatorics-lemma} the exponential growth factor is $A^N$, for some $A>1$.
\end{proof}

\subsection{Hardness of segmentation}

We showed that intuitive constraints do not render brute-force segmentation tractable. One may be tempted to conclude that this demonstrates the conjectured hardness of the segmentation problem. However, in this section we present a theorem that contradicts this conclusion. The proof builds on the technique of (polynomial-time) reduction \cite{aroraComputationalComplexityModern2009,gareyComputersIntractabilityGuide1979,vanrooijCognitionIntractabilityGuide2019}.  


\begin{definition}(Polynomial-time reducibility)
    Let $A$ and $B$ be computational problems. We say $A$ is 
    \textit{polynomial-time reducible} to $B$ if it is possible to tractably transform instances of $A$ into instances of $B$ such that solutions for $B$ can be easily transformed into solutions for $A$. Note that this implies that if a tractable algorithm for $B$ exists, it could be used to solve $A$ tractably (namely, via the tractable transformation, called the \textit{reduction}).
\end{definition}

\noindent We present such a reduction from the problem \textsc{segmentation} to a problem in graph theory. On the way, we will have introduced an alternative way of thinking about segmentation at the computational and algorithmic levels.

\begin{theorem}[]
    \textsc{segmentation} is tractable (polynomial-time computable) in the absence of constraints.
\end{theorem}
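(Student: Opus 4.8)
The plan is to prove tractability by exhibiting a polynomial-time reduction from \textsc{segmentation} to a shortest/longest path problem on a directed acyclic graph, thereby reducing optimal boundary placement to a well-understood and efficiently solvable graph problem. The key observation is that a segmentation of $S = (s_1, \dots, s_N)$ is uniquely determined by the set of boundary positions, and boundaries can be identified with the integer indices $0, 1, \dots, N$ (marking the ``cuts'' between and around elements). This immediately suggests a graph whose vertices are these $N+1$ positions.

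\begin{proof}
We construct a weighted directed acyclic graph $G$ as follows. Let the vertex set be $V = \{v_0, v_1, \dots, v_N\}$, where $v_i$ represents the boundary position after element $s_i$ (with $v_0$ marking the start). For every pair $i < j$, we add a directed edge $(v_i, v_j)$ representing the contiguous segment $p = (s_{i+1}, \dots, s_j)$, and we assign it weight $w(v_i, v_j) = F(p)$. Since the scoring function $F$ is available at the input, each edge weight can be computed by a single oracle call. The graph has $N+1$ vertices and $O(N^2)$ edges, so it is constructed in polynomial time.

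The crux of the reduction is the correspondence between segmentations and paths. A directed path from $v_0$ to $v_N$, say $v_0 \to v_{i_1} \to v_{i_2} \to \dots \to v_N$, visits strictly increasing positions and so selects segments $(s_1, \dots, s_{i_1}), (s_{i_1 + 1}, \dots, s_{i_2}), \dots$ that are pairwise disjoint and whose concatenation is exactly $S$. Conversely, every valid segmentation $P$ induces such a path by taking its boundary positions as the intermediate vertices. Because edge weights are additive and $V(P) = \sum_{p \in P} F(p)$, the total weight of a path equals the value $V(P)$ of its corresponding segmentation. Hence a maximum-weight $v_0$-to-$v_N$ path corresponds to an optimal segmentation, and the reduction preserves optimal solutions.

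It remains to solve the longest-path problem on $G$. The main thing to verify here — and the one step I expect to need care rather than difficulty — is that $G$ is acyclic: every edge goes from a lower-indexed to a higher-indexed vertex, so the natural ordering $v_0, v_1, \dots, v_N$ is a topological order and no directed cycle can exist. Longest path in a DAG is tractable (unlike in general graphs): one processes vertices in topological order and computes, for each $v_j$, the best value $D(v_j) = \max_{i < j} \bigl( D(v_i) + w(v_i, v_j) \bigr)$ via dynamic programming, which runs in time linear in the number of edges, i.e. $O(N^2)$. Reading back the optimal segmentation from the stored predecessors adds only linear overhead. Since both the construction of $G$ and the solution of the longest-path instance run in polynomial time, and the transformation of solutions is trivial, \textsc{segmentation} is polynomial-time computable in the absence of constraints.
\end{proof}
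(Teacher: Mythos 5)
Your proof is correct, but it takes a genuinely different route from the paper. You reduce \textsc{segmentation} to maximum-weight path in a DAG whose $N+1$ vertices are boundary positions and whose $O(N^2)$ edges are candidate segments; the path--segmentation bijection and the additivity of $V(P)$ are exactly right, and the topological-order dynamic program gives an explicit, essentially self-contained $O(N^2)$ algorithm (plus the $O(N^2)$ calls to $F$). The paper instead builds a \emph{segment graph}: one vertex per candidate segment with negated weight, edges between overlapping segments, so that valid segmentations become maximal independent sets (equivalently, independent dominating sets) in an interval graph; it then invokes a known result that \textsc{minimum-weight independent dominating set} is polynomial-time solvable on interval graphs. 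Your argument is more elementary and yields a sharper, more transparent runtime without appeal to external structural theorems; the paper's detour is deliberately chosen to expose the interval-graph and domination structure of the problem, which lets the authors contrast the tractable restricted case against the NP-hardness of the same graph problem on general graphs and motivates their broader point about where sources of complexity might actually reside. Both establish the theorem; yours is the more direct proof, theirs the more structurally informative one.
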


\begin{proof}
We will show that, given an arbitrary instance of the segmentation problem, we can tractably construct an instance (with the correct associated output) of a target problem which is itself tractably computable. To begin, we introduce a class of graphs which we use as a stepping stone.

\begin{definition}[Interval Graph]\label{interval-graph-def}
     An interval graph is an undirected graph $G = (V, E)$ built from a collection of intervals $\{p_i\} = \{\{x|a_i<x<b_i\}, ...\}$, here $x,a_i,b_i \in \mathbb{Z}$, by creating one vertex $v_i \in V$ for each interval $p_i$ and an edge $\{v_i, v_j\}$ whenever the corresponding intervals have a non-empty intersection: $E = \{\{v_i, v_j\} \in V \times V \ |\  p_i \cap p_j \neq \emptyset\}$.
 \end{definition}

\noindent Consider an instance of \textsc{segmentation}. Given an input sequence, it is possible to construct an \textit{interval graph} that satisfies Def. \ref{interval-graph-def}. Algorithm \ref{alg:cap} demonstrates the procedure.

\begin{algorithm}[!ht]
    \caption{\small
        Construct segment graph from sequence. \\
        \noindent\rule{0.482\textwidth}{1pt}\\
        \textit{Input:} \\ 
        $S = (s_1, s_2, s_3, ..., s_N), \ s_i \in \mathbb{Z}$ \Comment{\texttt{sequence}} \\
        $F: \mathscr{P} \mapsto \mathbb{Z^+}$   \Comment{\texttt{scoring function}} \\
        \textit{Output:}\\ 
        $V = \{(v_1, w_1), ..., (v_q, w_q)\}$    \Comment{\texttt{weighted vertex set}} \\
        $E = \{(v_i, v_j), ...\}$    \Comment{\texttt{edge set}}
    }\label{alg:cap}
    
    \begin{algorithmic}[1]\small 
        \item[]
        \Function{BuildSegmentGraph}{$S$, $F$}
        \State $P \gets \text{[ ]}$ \Comment{\texttt{legal segments}}
        \State $V \gets \{\}$   \Comment{\texttt{weighted vertex set}}
        \State $E \gets \{\}$   \Comment{\texttt{edge set}}
        
        \item[\Comment{\texttt{construct weighted vertex set}}] 
        
        \For{$i \gets 1$ to $|S|$}
            \For{$j \gets 0$ to $|S|-i$}
    			\State $segment \gets S[i:i+j]$
    			\State $weight \gets F(segment) \times (-1)$
    			\State $P\textbf{.append}([i, ..., i+j])$
    			\State $V\textbf{.append}(([i, ..., i+j], weight))$
			\EndFor
        \EndFor
        
        \item[\Comment{\texttt{construct edge set}}] 
    	
    	\For{$i \gets 1$ to $|P|-1$}
		    \For{$j \gets i+1$ to $|P|$}
    			\If{$P[i] \cap P[j] \neq \emptyset$}
    				\State $E\textbf{.append}((P[i], P[j]))$
				\EndIf
            \EndFor
        \EndFor


        \State \Return $(V, E)$
        \EndFunction
    \end{algorithmic}
\end{algorithm}
\vspace{-0.3cm}

\begin{remark}
Algorithm \ref{alg:cap} involves systematically generating all legal segments, computing and negating their weights, checking their pairwise overlap, and using this to construct a graph. We call this object a \textit{segment graph}.
\end{remark}

Consider the \textit{time complexity} of Algorithm \ref{alg:cap}. The elementary instructions are the weight computation (line 8), appending (lines 9-10, 16), and set intersection (line 15); all of which are polynomial-time computable ($F$ is assumed to be). We focus now on the number of implied iterations.
The loops defined in lines 5-6 yield $N + (N-1) + \ ... \ + 1$ iterations (the number of possible segments), given by a polynomial:
	\begin{align*}
	|P_N| &= \sum_{k=1}^N k = \frac{N(N+1)}{2}  &(N^{\text{th}} \text{ triangular number})
	\end{align*}
The loops defined in lines 13-14 yield a number of iterations equal to the number of segment pairs $(p_i, p_j) \in P_N^*$, given by the binomial coefficient $\binom{n}{k}$ with $n=|P_N|$ and $k=2$, which grows as a quadratic in $|P_N|$ (i.e. $4^\text{th}$-degree polynomial in $N$).
	$$|P_N^*| = \binom{|P_N|}{2} \sim O(N^4)$$
	
\noindent This algorithmic analysis demonstrates that \textsc{BuildSegmentGraph} (Algorithm \ref{alg:cap}) is polynomial-time computable.

Consider next the \textit{correctness} of Algorithm \ref{alg:cap}. We will show that a segment graph encodes the properties of candidate solutions to an instance of \textsc{segmentation}. For this, we need the following definitions.

\begin{definition}[Independent sets and maximality]\label{independence}
    Let $G = (V, E)$ denote a graph. We call a vertex set $V^* \subseteq V$ an \textit{independent set} if there exist no two vertices $u, v \in V^*$ such that $(u, v) \in E$. Such a set is said to be \textit{maximal} if there exists no vertex $v \in V$ that can be added to $V^*$ without breaking the independence.
\end{definition}

\begin{definition}[Dominating sets and minimality]\label{dominance}
    Let $G = (V, E)$ denote a graph. We call a vertex set $V^* \subseteq V$ a \textit{dominating set} if for all $v \in V$, either $v \in V^*$ or  there is an edge $(v, u) \in E$ for some $u \in V^*$. Such a set is said to be \textit{minimal} if there exists no vertex $v \in V^*$ that can be removed without breaking the dominance. 
\end{definition}

\noindent By construction, a legal segmentation (i.e., a collection of disjoint segments whose concatenation yields the original sequence) is guaranteed to be represented within the segment graph as a subset of vertices with two properties: 
\vspace{-0.0cm}
\begin{itemize}
    \itemsep0.2em
    \item \textit{maximal independence}: vertices are pairwise non-adjacent because segments in a segmentation should be disjoint; since the segments should span the sequence, adding any vertex breaks independence.
    \item \textit{minimal dominance}: vertices in the graph are either in the subset or adjacent to one of its elements because once a segment subset spans the sequence, any other segment is guaranteed to overlap; since the segments should be disjoint, removing any vertex breaks dominance.
\end{itemize}

\begin{remark}
How segment graphs make the structure of the original sequence problem transparent is illustrated in Fig. \ref{segment-graph}.
    
\end{remark}

\begin{figure}[h]
\vspace{0.0cm}
\begin{center}
    \includegraphics[scale=0.42]{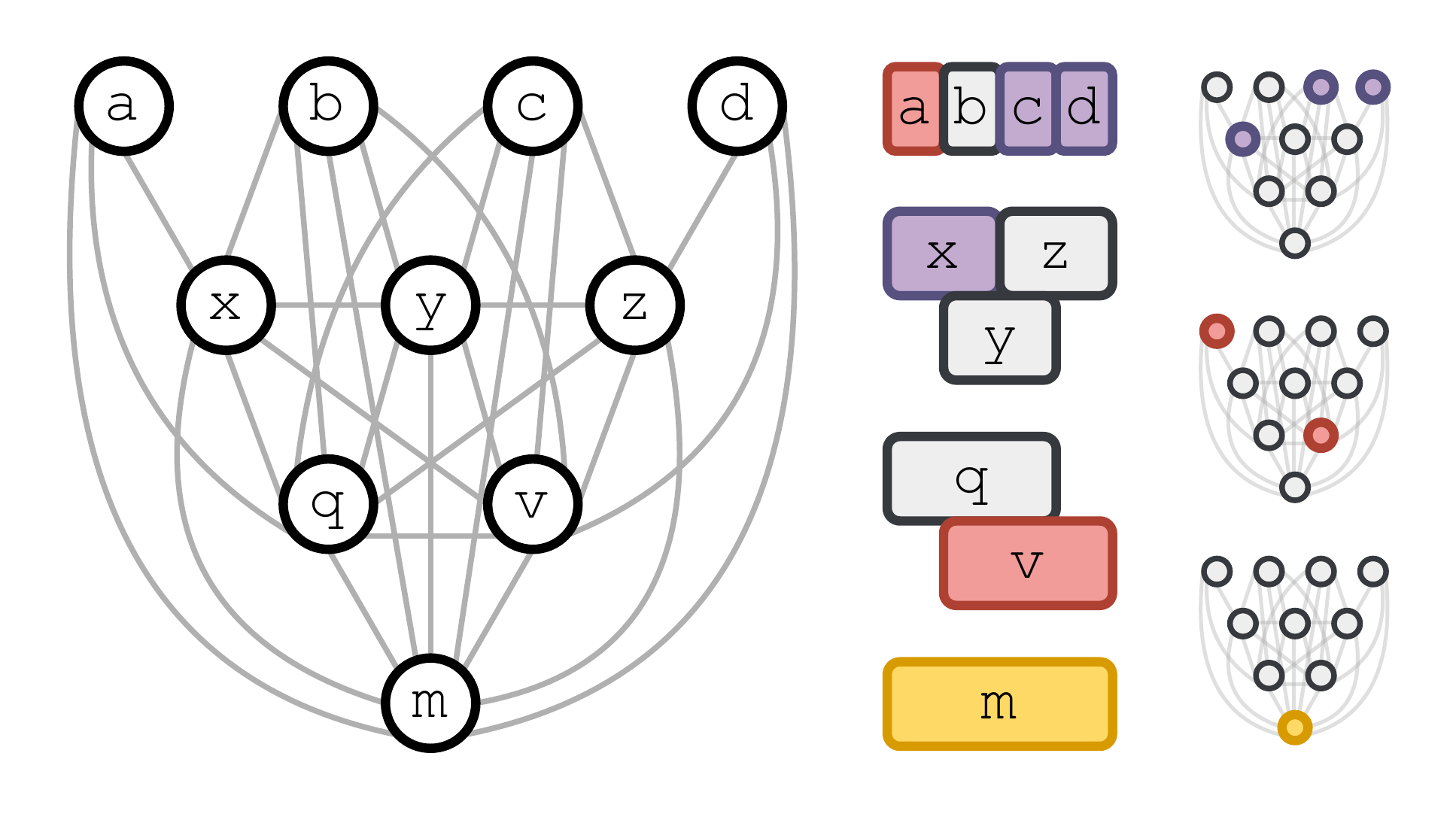}
\end{center}
\vspace{-0.5cm}
\caption{Segment graph encoding (left) of a sequence of length $4$. Nodes correspond to possible segments and edges represent pairwise overlap in the sequence. Possible segments are grouped according to length and overlap (middle). Candidate solutions are vertex subsets in the segment graph (right). Three example segmentations are color coded.}
\vspace{-0.3cm}
\label{segment-graph}
\end{figure}

\noindent A general feature of dominance and independence on arbitrary graphs is useful:

\begin{lemma}\label{ind-dom-lemma}
    An independent vertex set in a graph is a dominating set if and only if it is a \textit{maximal} independent set. Any such set is necessarily also a \textit{minimal} dominating set. \cite<cf.>{berge1962theory,goddardIndependentDominationGraphs2013}.
\end{lemma}

\noindent It follows from the above and Lemma \ref{ind-dom-lemma} that if a vertex subset in a segment graph is independent and dominant, then it is a candidate solution (i.e., valid segmentation). A feasible solution has, additionally, minimum weight among candidates: it is a \textit{minimum-weight independent dominating set}. With this, we introduce the formal graph problem we reduce to.
\newpage
\begin{definition} \ \\
    \textsc{minimum-weight independent dominating set} \\
    \textit{Input}: A vertex-weighted graph $G=(V, E)$. For each $v \in V$ we have a weight $W(v) \in \mathbb{Z}$. \\
    \textit{Output}: An independent dominating set $V^* \subseteq V$  such that $Q(V^*) = \sum_{v \in V^*}W(v)$ is minimum.
    
\end{definition}

\noindent So far, we have established that, given an instance $I_{seq}$ of \textsc{segmentation}, we can  construct, in polynomial time by Algorithm \ref{alg:cap}, call it $A(.)$, a corresponding instance $I_{graph} = A(I_{seq})$ of \textsc{Minimum-weight independent dominating set}. This demonstrates the validity of the reduction and we now finally consider the tractability of the problems.

Though the problem of finding minimum-weight independent dominating sets is \textit{NP-hard} in general and remains so in several special cases \cite{gareyComputersIntractabilityGuide1979,liuIndependentDominatingSet2015}, the following input restriction is relevant. 

\begin{lemma}\label{mwids-lemma}
    \textsc{minimum-weight independent dominating set} is polynomial-time computable provided the input graph is an \textit{interval graph}. \cite<for proof, see>[Theorem 2.4]{changEfficientAlgorithmsDomination1998}.
\end{lemma}

\noindent Recall that the restriction required by Lemma \ref{mwids-lemma} is guaranteed by our reduction. Hence, we conclude \textsc{segmentation} is tractably computable, which completes the proof.
\end{proof}

\section{Discussion}

Computational feasibility is a widespread concern that motivates choices in the framing and modeling of biological and artificial intelligence. While implicit or informal assumptions abound, the reality may turn out to be counterintuitive as they are examined formally. Here, we undertook a formal examination of the existing computational assumptions about \textit{Segmentation}. Using complexity-theoretic tools, we mathematically proved two sets of results that run counter to commonly-held assumptions: 1) the search space is either not large to begin with or it is large but placing intuitive constraints does not alleviate the issue; and 2) a computational model of segmentation that formalizes its conceptualization across domains is tractably computable in the absence of widely-adopted constraints to address the assumed hardness.

Beyond our proofs, we set the groundwork for further refinements of segmentation theory and its computational analyses: a) we contributed a formalization of the computation that satisfies a domain-agnostic specification; b) we illustrated the relationship between segmentation and integer compositions, which makes the search space amenable to asymptotic analyses; and c) we built a bridge from the problem as originally defined on sequences to the mathematics of graphs, which opens up alternative formalisms to model it and to think about it algorithmically. A desirable consequence of translating problems between formal domains is that structure which was originally hidden from view may become visible.

Our results challenge existing intuitions about hardness of the segmentation problem and its sources of complexity, and by extension question the motivation of proposed solutions and their associated empirical research foci. For instance, concerns about search space size and what mitigates it may be misplaced. The space of possible segments is not exponential to begin with; the space of segmentations is. However, the bounds on segment size are, neither individually nor combined, a source of exponentiality.  Left unexamined, this may still appear to support the conjectured hardness of the problem. But our tractability proof challenges this intuitive conclusion. It demonstrates that no assumptions about bottom-up segmentation cues or top-down biases on segment properties are necessary to make the formal problem tractable.  These proofs run counter to the computational efficiency concerns that partially motivate segmentation theories. For instance, proposals that argue from minimal units of representation \cite<cf.>{poppelHierarchicalModelTemporal1997}, temporal integration limits of neuronal populations \cite<cf.>{overathCorticalAnalysisSpeechspecific2015}, intrinsic oscillatory timescales \cite<cf.>{ghitzaRoleThetaDrivenSyllabic2012,wolffIntrinsicNeuralTimescales2022}, bottom-up segmentation cues \cite<e.g.,>{giraudCorticalOscillationsSpeech2012}, and top-down biases on candidate search \cite<e.g.,>{fristonActiveListening2021}, which to some extent build on the supposition of problem hardness, search space size, and various sources of complexity. This suggests that intractability concerns, if any, might be better placed, for instance, on the processes guiding segmentation rather than the boundary placement itself.

Together, the results proven here caution against intuitive notions about the properties of computational problems driving empirical programs, and demonstrate the need and benefits of critically assessing their soundness. Whenever intuitions are challenged, this enables researchers to either slightly or entirely redirect efforts as ideas shift regarding what evidence is relevant to collect. For instance, if researchers believe that a certain problem is computationally hard and that some set of neural and environmental regularities might speak to constraints that make it tractable, then they would be inclined to look for those regularities that satisfy such a requirement. If, however, the original belief is removed, the target regularities or the kinds of experiments that are adequate to test their putative role might be different.

We close with a similar word of caution about interpreting our results. These are to some degree tied to the particular formalization we put forth. While modeling choices were motivated and they bear some generality, alternative theoretical commitments are conceivable. For instance, an extended model could allow for multiple unsegregated high-dimensional input streams; it is an open question whether it would have different complexity properties. We view our analyses not as the last word on the computational complexity of segmentation but rather as initial words in a conversation with a sound formal basis.

\section{Acknowledgments}
We thank the Computational Cognitive Science group at the Donders Institute for Brain, Cognition, and Behaviour for discussions, Nils Donselaar for invaluable feedback on a previous version that helped improve the manuscript, and Ronald de Haan for comments on future directions of this work. We thank four anonymous reviewers and one anonymous meta-reviewer for thoughtful comments, and Reviewer 1 in particular for a comprehensive, constructive and educational review. FA thanks David Poeppel for support and discussions on auditory segmentation. TW was supported by NSERC Discovery Grant 228104-2015.

\nocite{baldwinSegmentingDynamicHuman2008}
\nocite{berge1962theory}

\bibliographystyle{apacite}

\setlength{\bibleftmargin}{.125in}
\setlength{\bibindent}{-\bibleftmargin}

\bibliography{references}

\end{document}